\newtheorem{lemma}{Lemma}
\newtheorem{theorem}{Theorem}
\newtheorem{definition}{Definition}
\title{\LARGE \bf
Assisted Shortest Path Planning for a Convoy through a Repairable Network
}
\author{Abhay Singh Bhadoriya$^{1}$, Christopher Montez$^{1}$, Sivakumar Rathinam$^{1}$,\\ Swaroop Darbha$^{1}$, David W. Casbeer$^{2}$, and Satyanarayana G. Manyam$^{3}$%
\thanks{$^{1}$With Texas A \& M University, College Station, TX, USA
        {\tt\small abhay.singh@tamu.edu, yduaskme@tamu.edu, dswaroop@tamu.edu, srathinam@tamu.edu}}%
\thanks{$^{2}$David W. Casbeer is with the Controls Center, Air Force Research Laboratory, WPAFB, OH, USA {\tt\small david.casbeer@us.af.mil}}%
\thanks{$^{3}$Satyanarayana G. Manyam is with Infoscitex corporation, a DCS Company, Dayton, OH, USA 
        {\tt\small smanyam@infoscitex.com}}%
\thanks{This work was supported in part by AFOSR LRIR No. 21RQCOR084.}
\thanks{Distribution Statement A. Approved for public release, distribution unlimited. Case Number: AFRL-2022-0779.}
}
\begin{document}

\maketitle

\begin{abstract}

    In this article, we consider a multi-agent path planning problem in a partially impeded environment. The impeded environment is represented by a graph with select road segments (edges) in disrepair impeding vehicular movement in the road network. A convoy wishes to travel from a starting location to a destination while minimizing some accumulated cost. The convoy may traverse an impeded edge for an additional cost (associated with repairing the edge) than if it were unimpeded. A second vehicle, referred to as a service vehicle, is simultaneously deployed with the convoy. The service vehicle assists the convoy by repairing an edge, reducing the cost for the convoy to traverse that edge. The convoy is permitted to wait at any vertex to allow the service vehicle to complete repairing an edge. The service vehicle is permitted to terminate its path at any vertex. The goal is then to find a pair of paths so the convoy reaches its destination while minimizing the total time (cost) the two vehicles are active, including any time the convoy waits. We refer to this problem as the Assisted Shortest Path Problem (ASPP). We present a generalized permanent labeling algorithm to find an optimal solution for the ASPP. We also introduce additional modifications to the labeling algorithm to significantly improve the computation time and refer to the modified labeling algorithm as $GPLA^*$. Computational results are presented to illustrate the effectiveness of $GPLA^*$ in solving the ASPP. We then give concluding remarks and briefly discuss potential variants of the ASPP for future work.

\end{abstract}

\section{INTRODUCTION}

There has been an increasing interest in the study of cooperative behavior between multiple autonomous agents operating in a shared environment to complete some given task \cite{wagner2015subdimensional,sharon2015conflict,de2013push,standley2010finding,erdmann1987multiple,silver2005cooperative}. This growing interest is due to the many applications which can make use of the cooperation of multiple agents, such as search-and-rescue \cite{li2016hybrid}, cooperative manipulation \cite{parker_thesis}, foraging \cite{steels1990}, surveillance\cite{lynne1999}, and safe escort using coordinated UAV and UGV systems \cite{li2016hybrid,minaeian2015vision,garcia2017coordinated,chung2021offensive}. The manner in which the agents cooperate will vary depending on the collective goal considered. The agents may employ a divide-and-conquer approach and complete independent tasks, such as in a search-and-rescue application. The agents may instead be required to directly assist each other, such as in the case of cooperative manipulation where two or more agents work together to physically move an object in space (box-pushing \cite{donald_box_pushing}). A combination of these two approaches may also be considered, as in this paper. The use of multiple agents to complete a given task can potentially lead to performance improvements (faster times, more tasks done per unit cost, etc.). The use of multiple agents also allows for a wider variety of situations to be addressed. We refer the reader to \cite{cooperative_Robots_Survey} for a comprehensive survey of cooperative behavior of multiple autonomous agents. 

The focus of this paper is the path planning of two autonomous agents operating in a partially impeded environment. As an example, this impeded environment may be a warehouse in which agents are organizing goods. The pathways in the warehouse may have some obstructions, such as fallen goods, that hinder the agents' ability to maneuver. This impeded environment is represented by a graph where select edges represent obstructions using edge weights that will be defined later. These edges are assumed to be known \textit{a priori} and are referred to as impeded edges. In the general case, obstructions may be physical or abstract. A designated agent, referred to as the convoy vehicle for the remainder of this paper, must leave a specified starting location and reach a given destination while minimizing some accumulated cost. The convoy will be assumed to have the capability to handle the obstructions present, but doing so will incur some cost in addition to the cost of travel. A second designated agent, referred to as the service vehicle for the remainder of this paper, can be simultaneously deployed to assist the convoy by clearing these obstructions itself as the convoy is traveling in the environment. As the service vehicle travels, it also incurs some cost. When the service vehicle clears an obstruction, it incurs some additional cost. Once an obstruction is cleared, it is assumed to remain cleared. The convoy may choose to wait at a vertex for some additional cost to allow the service vehicle to clear obstructions. The service vehicle may terminate at any vertex in the graph without continuing to incur additional cost as the convoy continues to its destination. The objective is then to find a pair of paths for the convoy and service vehicle such that the total cost accumulated for the two vehicles is minimized. For ease of discussion, we will refer to this problem as the Assisted Shortest Path Problem (ASPP). We provide a mathematical formulation for the ASPP in Section \ref{Problem_statement}.

Restricted variations of the ASPP have been studied in \cite{montez2021finding} and \cite{montez2021approximation}. In these papers, the convoy was not able to clear any obstructions by itself. This is the same as the cost of the convoy traversing any impeded edge being infinite. In \cite{montez2021finding}, a mixed-integer linear programming formulation was presented.
An approximation algorithm was presented in \cite{montez2021approximation} for the case where the convoy may share an impeded edge with the service vehicle. The addition of the convoy's ability to traverse through impeded edges in this paper adds a significant level of complexity and allows for modeling more complex scenarios.

The main contribution of the paper is the development of a generalized, permanent labeling algorithm ($GPLA$) that finds an optimal solution for the ASPP. Permanent labeling algorithms were introduced in \cite{desrochers1988generalized} for the path planning of a {\it single} vehicle in a graph while adhering to various path structural constraints. In this approach \cite{irnich2005shortest}, the state of the vehicle is stored as a label, and the label is updated as the vehicle moves to reflect the changes in the state. The key part of this approach lies in defining suitable labels and their updates so that partial solutions that can potentially lead to an optimal solution are not discarded. In this paper, we develop this approach to handle multiple, cooperative vehicles and solve the ASPP. After we develop the $GPLA$, we introduce filters to further reduce the search space and improve the performance of the algorithm. We refer to this refined $GPLA$ with filters as $GPLA^*$. The procedure we present can be further generalized to handle more vehicles as needed.

The structure of this paper is as follows. Section \ref{Problem_statement} presents the mathematical formulation for the ASPP. Section \ref{Solution_strategy} presents the $GPLA$ and $GPLA^*$. Section \ref{Results} presents computational results for $GPLA^*$ and the $GPLA$. $GPLA*$ is shown to be significantly better than the $GPLA$. Several families of problem instances are constructed to show the influence of various aspects of the ASPP on the runtime of $GPLA^*$ and the structure of optimal solutions to the ASPP. Section \ref{Conclusion} presents concluding remarks and future work.

\color{black}

\section{Problem Statement} \label{Problem_statement}

Let $G = (V, E)$ be an undirected, connected graph representing an impeded environment. $V$ is the set of vertices and $E$ is the set of undirected edges. The convoy and service vehicle start at vertices $p$ and $q$, respectively, where $p$ and $q$ need not be distinct. Let $d \in V$ be the destination of the convoy. The service vehicle may terminate at any vertex at any time and therefore has no destination. For the remainder of this paper, we will use cost and time interchangeably. Changes can be easily made to consider more general costs. Edges $K \subseteq E$ needing repair are known \textit{a priori} and are referred to as impeded edges. The remaining edges are called unimpeded edges. We say an impeded edge is serviced if either the convoy or service vehicle has completely traversed that edge. In this paper, we use repair and service for an impeded edge interchangeably to indicate removing an obstruction (physical or abstract) in the impeded environment.
We use the term impeded travel cost to refer to the cost for a vehicle to take an impeded edge that has not yet been serviced and use the term unimpeded travel cost otherwise. Each edge $e \in E$ has four edge weights of the form $( T^u_e, T^i_e, \tau^u_e, \tau^i_e)$. $T^u_e$ and $T^i_e$ are the unimpeded and impeded travel cost for the convoy, respectively. The unimpeded and impeded travel costs for the service vehicle are $\tau^u_e$ and $\tau^i_e$, respectively.
All costs are non-negative and are known. For unimpeded edges, $T^i_e = T^u_e$ and $\tau^i_e = \tau^u_e$.
For impeded edges, we have $T^i_e > T^u_e$ and $\tau^i_e > \tau^u_e$. Therefore, the cost of the convoy taking edge $e$ is reduced if it is first serviced by the service vehicle. All edges are undirected so the cost in both directions are taken to be identical. When an impeded edge $e$ is serviced, it remains serviced.

We assume the following: (1) the convoy and service vehicles can share vertices and edges without conflict, (2) the two vehicles start at the same time, (3) the two vehicles  communicate at all times and information is shared in a negligible amount of time, and (4) the service vehicle always travels faster than the convoy at any edge. The convoy is allowed to wait at any vertex to give the service vehicle time to repair impeded edges. Waiting will incur some additional cost. In this paper, this waiting cost is simply the time the convoy waits. We also allow the service vehicle to wait, but doing so will incur zero additional cost. This was done for two main reasons. First, depending on the application the service vehicle may be able to temporarily shut off or idle for a negligible amount of power consumption. Second, since the service vehicle is assumed to travel faster than the convoy and all costs are non-negative, it is clear an optimal solution will never have the service vehicle wait at a vertex and so we can set the wait cost to be zero for mathematical convenience. A more general waiting cost can be included, but the subsequent definitions and algorithms will need to be modified.

\begin{figure}[!ht]
\begin{center}
\includegraphics[width=\linewidth]{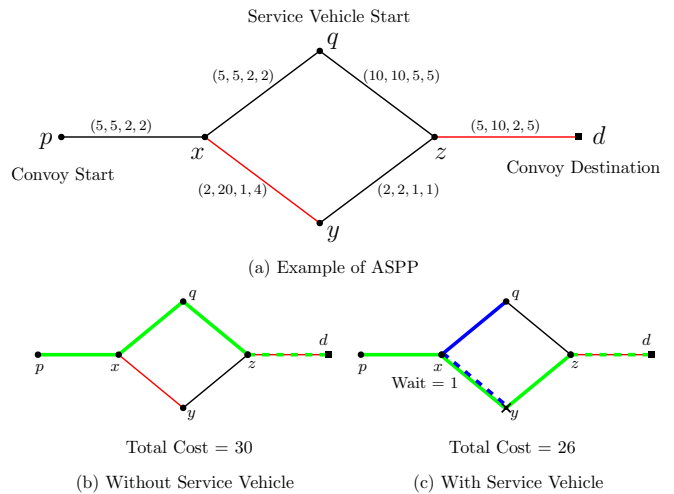}
\caption{(a) A sample instance for the ASPP. Black edges represent unimpeded edges and red edges represent impeded edges. The edge weights associated with each edge are of the form $(T^u_e,T^i_e,\tau^u_e,\tau^i_e)$. In (b) and (c), the convoy and the service vehicle paths are represented by green and blue colors, respectively. The dashed path represents the impeded edge is serviced by the vehicle associated with the corresponding color.
In (b), the optimal solution for the convoy without the assistance of the service vehicle is shown. In (c), the optimal solution for the convoy with the assistance of the service vehicle is shown; in this case, the convoy will wait at node $x$ for $1$ unit of time.
The service vehicle terminates its path at node $y$, which is represented by a cross. It is clear the solution in (c) is better than the solution in (b).}
\label{ASPPsample} 
\end{center}
\end{figure}

Let $X_{pa}$ be a path from $p$ to $a \in V$ for the convoy and $\bar{X}_{qb}$ be a path from $q$ to $b \in V$ for the service vehicle. We note that a vehicle's path will also potentially have waiting at one or more vertices. The two paths are coupled using the rules previously outlined. The cost of $X_{pa}$ will depend on the decisions made by the service vehicle in path $\bar{X}_{qb}$ and vice-versa. {We note that the service vehicle may also choose to remain at $q$.} The accumulated cost of these two paths is the sum of the time the convoy reaches $a$ and the time the service vehicle first reaches $b$ (it may choose to terminate at $b$) while adhering to the motion rules and travel cost rules. This cost includes any waiting time by the convoy. In other words, the accumulated cost is the total cost of the vehicles being active. If the service vehicle never initially leaves $q$, the cost of $\bar{X}_{qq}$ is zero. We denote the total cost by $C(X_{pa}, \bar{X}_{qb}$). The ASPP is then to find two coupled paths $X_{pd}$ and $\bar{X}_{qv}$, where $v$ is any vertex in $V$, that minimizes $C(X_{pd}, \bar{X}_{qv})$.

A sample problem for the ASPP is shown in Fig. \ref{ASPPsample}(a). 
Fig. \ref{ASPPsample}(b) represents the least-cost path for the convoy, $X_{pd} = (p,\,x,\,q,\,z,\,d)$ with total cost $30$, without the help of the service vehicle. The convoy will repair the edge $(z,d)$ in this solution.
Fig. \ref{ASPPsample}(c) represents the optimal solution with the help of the service vehicle. The service vehicle will take the path $\bar{X}_{qy} = (q,\,x,\,y)$, repair the edge $(x,y)$ and then terminate at $y$. The convoy will start from $p$ and wait at $x$ for $1$ unit of time to let the service vehicle finish the repair and traverse the serviced edge $(x,y)$. The convoy will then repair the edge $(z,d)$ to reach the destination. The total cost for this solution is $26$.

\section{Generalized Permanent Labeling Algorithms} \label{Solution_strategy}

We first define an abstract object called a label to store relevant information on decisions made by both vehicles. Recall the definition of a path and the accumulated cost from Section \ref{Problem_statement}.
\begin{definition}[Label] \label{label}
    Suppose the convoy and service vehicle have taken paths $X_{pi}$ and $\bar{X}_{qj}$ to $i$ and $j$ in $G$ in times $T_i$ and $\tau_j$, including waiting, while accumulating a total cost $C_{ij}$. For each edge $e$ serviced by either vehicle, create a tuple $(e, \,t_e)$ where $t_e$ is the time edge $e$ was serviced. The set $V_{ij}$ is the collection of the tuples $(e, \, t_e), \, \forall e \in K$. We then define a label $\lambda_{ij}$ as
    \begin{equation*}
        \lambda_{ij} = (i, j, T_i, \tau_j, C_{ij}, V_{ij}).
    \end{equation*}
\end{definition}
\noindent For a label $\lambda_{ij}$, we define $S(V_{ij})$ to be the set of serviced edges in $V_{ij}$. 

\color{black}






We next introduce resource extension functions (REFs) associated with each label. REFs describe how a label $\lambda_{ij}$ is extended to create a new label $\lambda_{lm}$. Extending a label corresponds to feasible extensions of the paths $X_{pi}$ and $\bar{X}_{qj}$ from decisions involving edges $(i, l)$ and $(j, m)$, respectively. We allow for $i = l$ or $j = m$, but not simultaneously\footnote{Both $i = l$ and $j = m$ is not allowed because this condition implies both the vehicles are just waiting without performing any motion or servicing task. This clearly does not lead to an optimal solution.}. 


\begin{definition}[Resource Extension Functions] \label{REF}
Suppose paths \color{black} $X_{pi}$ and $\bar{X}_{qj}$ associated with label $\lambda_{ij}$ are extended by edges $e_x = (i,l)$ and $e_y = (j, m)$, respectively, to create a new label $\lambda_{lm}$. Let $I_{ij} = K \setminus S(V_{ij})$ be the set of remaining impeded edges. \color{black} Then, 






\begin{align*}
    T_l &= 
    \begin{cases}
    T_i + T_{e_x}^u, \hspace{4.25cm} e_x \in E \setminus K \\
    T_i + \min (T_{e_x}^i, T_{e_x}^u + \max(0,t_{e_x}-T_i)), \quad e_x \in S(V_{ij}) \\
    T_i + T_{e_x}^i, \hspace{4.25cm} e_x \in I_{ij} \\
    \max(T_i, \tau_m),  \hspace{3.3cm} i = l \, \, \wedge \, \, j\neq m
    \end{cases}\\
    \tau_m &=
    \begin{cases}
    \tau_j + \tau_{e_y}^i , \quad \quad \quad \quad e_y \in I_{ij} \\
    \tau_j +\tau_{e_y}^u, \quad \quad \quad \quad e_y \in E \setminus I_{ij} \\ 
    \max(\tau_j,T_l), \hspace{0.9cm} j=m \, \, \wedge \, \, i\neq l 
    \end{cases} \\
    C_{lm} &= 
    \begin{cases}
    C_{ij} + (T_l - T_i) + (\tau_m - \tau_j), \quad j\neq m \\
    C_{ij} + (T_l - T_i), \hspace{2cm} j = m
    \end{cases} \\
    V_{lm} &= \begin{cases}
        V_{ij},  \hspace{2.75cm} e_x, e_y \in E \setminus I_{ij} \\
        V_{ij} \cup \{(e_x, T_l)\}, \quad \quad \quad  \,e_x \in I_{ij} \, \, \wedge \, \, e_y \in E \setminus I_{ij} \\
        V_{ij} \cup \{(e_y, \tau_m)\}, \quad \quad \quad e_x \in E \setminus I_{ij} \, \, \wedge \, \, e_y \in I_{ij} \\
        V_{ij} \cup \{(e_x, T_l), (e_y, \tau_m)\}, \quad e_x, e_y \in I_{ij}
    \end{cases}
\end{align*}



\end{definition}

\noindent The REFs in Definition \ref{REF} encode the cost rules previously outlined in Section \ref{Problem_statement} and tracks the edges that have been serviced and the times they were serviced. Each label is associated with a feasible trajectory pair, including the optimal solution.
%
There are an infinite number of possible labels due to cycles and waiting. We introduce a dominance rule to reduce the number of labels under consideration to a finite number.
\color{black}
\begin{definition}[Dominance Rule] \label{domi}
Consider two labels $\lambda_{ij}'$ and $\lambda_{ij}$. We say $\lambda_{ij}'$ dominates $\lambda_{ij}$ if and only if

\begin{enumerate}
    \item $T_i' \leq T_i$ 
    \item $\tau_j' \leq \tau_j$ 
    \item $S({V'_{ij}}) \supseteq S(V_{ij}) $ 
    \item $t_e' \leq t_e \quad \forall e\in S(V_{ij})$
\end{enumerate}
If $\lambda_{ij}' = \lambda_{ij}$, one label can be discarded arbitrarily if only one optimal solution is required. If all optimal solutions are needed, then keep both labels. 
\end{definition}
\noindent Conditions 1 and 2 state that both vehicles have reached the same pair of vertices in less time. Note that we do not require a separate dominance condition for cost as it follows from Conditions 1 and 2. If a more general cost is used, an additional condition $C'_{ij} \leq C_{ij}$ must be included. Condition 3 states at least all serviced edges in $\lambda_{ij}$ have also been serviced in $\lambda'_{ij}$. Condition 4 states each serviced edge in $\lambda_{ij}$ was serviced earlier or at the same time in $\lambda'_{ij}$. 
\color{black}

\begin{theorem} \label{nondominated theorem}
The extensions of only the non-dominated labels need to be considered to obtain the optimal solution.
\end{theorem}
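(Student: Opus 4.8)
The plan is to prove the theorem by showing that the dominance relation of Definition~\ref{domi} is preserved under the resource extensions of Definition~\ref{REF}, and that along any chain of successively dominating labels the accumulated cost never increases; the theorem then follows by replacing, step by step, each dominated label used to build an optimal solution by a non-dominated label that dominates it. Recall from the remark after Definition~\ref{REF} that every feasible trajectory pair — in particular an optimal one — is represented by a label obtained from the initial label $\lambda_{pq}$ by finitely many extensions, and that the set of labels reachable by at most a fixed number of extensions is finite, so that a non-dominated such label dominating any given one always exists.

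The core of the argument is the following claim. Suppose $\lambda'_{ij}$ dominates $\lambda_{ij}$, and $\lambda_{ij}$ is extended by the edge pair $(e_x,e_y)=((i,l),(j,m))$ to produce $\lambda_{lm}$. Then the same pair is a feasible extension of $\lambda'_{ij}$ and yields a label $\lambda'_{lm}$ that dominates $\lambda_{lm}$. I would prove this by checking the four conditions of Definition~\ref{domi} on the new labels. The backbone is that Condition~3 for $\lambda'_{ij},\lambda_{ij}$ gives the inclusion $I'_{ij}=K\setminus S(V'_{ij})\subseteq K\setminus S(V_{ij})=I_{ij}$; hence any edge impeded for the primed label is impeded for the unprimed one, while an edge already serviced in the primed label is traversed at cost no larger than its impeded cost. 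Combining this with Conditions~1,2,4, the monotonicity of $\min$ and $\max$, and the inequalities $T^u_e\le T^i_e$ and $\tau^u_e\le\tau^i_e$, one verifies case by case that $T'_l\le T_l$ and $\tau'_m\le\tau_m$; the coupled waiting cases $i=l$ (where $T_l=\max(T_i,\tau_m)$) and $j=m$ (where $\tau_m=\max(\tau_j,T_l)$) cannot occur simultaneously by the footnote to Definition~\ref{REF}, so there is no circular dependence and monotonicity of $\max$ closes them. Condition~3 on $\lambda'_{lm},\lambda_{lm}$ holds because the edges newly serviced in $\lambda_{lm}$ are exactly those of $\{e_x,e_y\}\cap I_{ij}$, and each such edge is either already in $S(V'_{ij})$ or lies in $I'_{ij}$ and is serviced by the mirrored extension. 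For Condition~4, previously serviced edges inherit the inequality from Condition~4 for $\lambda'_{ij},\lambda_{ij}$, and for an edge newly serviced in $\lambda_{lm}$ one checks that the time it carries in $\lambda'_{lm}$ — namely $T'_l$ when it is newly serviced there, or an earlier recorded time when the primed label had serviced it before — is at most the time it carries in $\lambda_{lm}$, the latter sub-case requiring a short side invariant bounding recorded service times by the corresponding traversal completion times.

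Granting the claim, the cost increments in Definition~\ref{REF} are $T_l-T_i$ and $\tau_m-\tau_j$, so a chain of mirrored extensions starting from a dominating label accumulates total cost no larger than the original chain; were the more general cost of the remark after Definition~\ref{domi} used instead, one would append $C'_{ij}\le C_{ij}$ to the dominance definition and carry it through the claim. I then induct on the number of extensions used to build an optimal label. At the base, if $\lambda_{pq}$ is dominated, replace it by a non-dominated label dominating it, which exists by the finiteness noted above together with transitivity of dominance. At each step, mirror the next extension from the current non-dominated label using the claim, and if the label produced is dominated replace it, again by finiteness and transitivity, with a non-dominated dominator. The chain so constructed uses only extensions of non-dominated labels and ends in a label that dominates the optimal one, hence has cost at most the optimum and is therefore itself optimal. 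This is exactly the assertion of Theorem~\ref{nondominated theorem}.

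The step I expect to be the main obstacle is the case analysis in the core claim. The REFs branch on whether each traversed edge is unimpeded, already serviced, or still impeded, and on whether each vehicle waits, producing many combinations; the delicate ones are the branch in which $e_x$ is already serviced in $\lambda'_{ij}$ but still impeded in $\lambda_{ij}$, so that the primed convoy pays the timing term $\min(T^i_{e_x},\,T^u_{e_x}+\max(0,t'_{e_x}-T'_i))$ rather than the impeded cost, and the waiting branches in which $T_l$ and $\tau_m$ reference each other. Showing that dominance is genuinely closed under extension in every such case — and, relatedly, pinning down the side invariant needed for Condition~4 on newly serviced edges — is where essentially all of the work lies; the induction and the finiteness/transitivity bookkeeping are then routine.
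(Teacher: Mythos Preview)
Your proposal is correct and follows essentially the same line as the paper's own proof: both establish that dominance (Definition~\ref{domi}) is preserved under every resource extension via the same case analysis on the REFs of Definition~\ref{REF}, including the asymmetric case $e_x\in S(V'_{ij})\setminus S(V_{ij})$ and the coupled waiting branches, and then conclude that dominated labels may be discarded. Your write-up is slightly more explicit than the paper's---you spell out the inductive replacement of dominated intermediate labels by non-dominated ones and flag the side invariant needed for Condition~4 on newly serviced edges---whereas the paper closes the argument in a single line after the case analysis.
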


\begin{proof}
Let $\lambda'_{ij}$ and $\lambda_{ij}$ be two distinct \color{black} labels with $\lambda'_{ij}$ dominating $\lambda_{ij}$. We need to show any feasible extension of $\lambda_{ij}$ will be dominated by the same extension of $\lambda'_{ij}$. \color{black}

Both $\lambda'_{ij}$ and $\lambda_{ij}$ must have the same feasible extensions.
Let the labels be extended by $e_x=(i,l)$ and $e_y =(j,m)$ for the convoy and service vehicle, respectively, resulting in new labels $\lambda_{lm}$ and $\lambda'_{lm}$\color{black}. We allow for $i = l$ and $j = m$, but not simultaneously.
Using Definitions \ref{REF} and \ref{domi}, we observe the following: \color{black}

\begin{itemize}
    \item If $i \neq l$ and $e_x \notin S(V_{ij}) \cup S(V'_{ij})$, then we have $T'_l \leq T_l$.
    
    \item Suppose $i \neq l$ and $e_x \in S(V_{ij}) \cap S(V'_{ij})$. From $t'_{e_x} \leq t_{e_x}$,
    \begin{align*}
                T_i' + \min (T_{e_x}^i, T_{e_x}^u + &\max(0,t'_{e_x}-T'_i)) \leq\\ T_i + \min (&T_{e_x}^i, T_{e_x}^u + \max(0,t_{e_x}-T_i)),
    \end{align*}
    which implies $T'_l \leq T_l$.
    
     \item Suppose $i \neq l$ and $e_x \in S(V'_{ij}) \setminus S(V_{ij})$. Then
    \begin{align*}
            T_i' + \min (T_{e_x}^i, T_{e_x}^u + \max&(0,t'_{e_x}-T'_i)) \leq T_i + T_{e_x}^i,
    \end{align*}
    which implies $T'_l \leq T_l$.
    
    \item Suppose $j \neq m$. Since $S(V'_{ij}) \supseteq S(V_{ij})$ and $\tau'_j \leq \tau_j$, from Definition \ref{REF} we must have $\tau'_m \leq \tau_m$.
    
    \item Suppose $i = l$ and $j \neq m$. Then
    \begin{equation*}
        \max(T'_i, \tau'_m) \leq \max(T_i, \tau_m)
    \end{equation*}
    which implies $T'_l \leq T_l$.
    
    \item Suppose $j = m$ and $i \neq l$. Then
    \begin{equation*}
        \max(\tau'_j, T'_l) \leq \max(\tau_j, T_l),
    \end{equation*}
    which implies $\tau'_m \leq \tau_m$.
    
    \item The same extension is used so clearly $S(V'_{ij}) \supseteq S(V_{ij})$ implies $S(V'_{lm}) \supseteq S(V_{lm})$.
    
    \item Suppose $e_x \in K$.
    
    \begin{itemize}
        \item If $e_x \notin S(V_{ij}) \cup S(V'_{ij})$, from $T'_l \leq T_l$ it follows that $t'_{e_x} \leq t_{e_x}$.
        
        \item If $e_x \in S(V'_{ij}) \setminus S(V_{ij})$, then $t'_{e_x} \leq T'_i \leq T'_l \leq T_l$ and so $t'_{e_x} \leq t_{e_x}$.
    \end{itemize}
    
    \item Suppose $e_y \in K$. 
    
    \begin{itemize}
        \item If $e_y \notin S(V_{ij}) \cup S(V'_{ij})$, from $\tau'_j \leq \tau_j$ it follows that $t'_{e_y} \leq t_{e_y}$. 
        
        \item If $e_y \in S(V'_{ij}) \setminus S(V_{ij})$, then $t'_{e_y} \leq \tau'_j \leq \tau'_m \leq \tau_m$ and so $t'_{e_y} \leq t_{e_y}$.
    
    \end{itemize}

    \color{black}
\end{itemize}

We see $\lambda'_{lm}$ will always dominate $\lambda_{lm}$.
Since the optimal solution must be a non-dominated label, we need to only consider non-dominated labels. 

\end{proof}
From Theorem \ref{nondominated theorem}, we can solve the ASPP by repeatedly extending non-dominated labels, including the label corresponding to the initial configuration, until there are no more unique non-dominated labels that can be generated. Then, the non-dominated label that has the convoy position at $d$ and with the lowest cost corresponds to an optimal solution. 

We also make a few trivial observations for optimal solutions to the ASPP that will reduce the number of non-dominated labels to consider:
\begin{itemize}
    \item[(i)] The service vehicle will never begin to move again after pausing at a vertex.
    
    \item[(ii)] The service vehicle, if deployed, will only terminate immediately after servicing all of its assigned impeded edges.
    
    \item[(iii)] The convoy will only wait at an end of an impeded edge it uses later.
    
    \item[(iv)] A convoy waiting to use an impeded edge will wait until the service vehicle has serviced that edge.
    
    \item[(v)] The convoy will never wait at a vertex after the service vehicle has terminated its motion.
\end{itemize}

\noindent These observations are a direct consequence of the structure of the cost rules from Section \ref{Problem_statement}. It should be noted these observations may not hold for more general cost structures and variants of the ASPP involving additional constraints. Using these observations, we can filter some extensions of non-dominated labels that will never eventually result in an optimal solution. To do this, we introduce two Boolean variables $SV_{term}$ and $\delta$ for each label and write
\begin{equation*}
    \lambda_{ij} = (i, j, T_i, \tau_j, C_{ij}, V_{ij}, SV_{term}, \delta).
\end{equation*}
$SV_{term}$ is true when the service vehicle has terminated and is false otherwise. The label $\lambda_{pq}$ associated with the initial state of the vehicles has $SV_{term}$ set to false by default. When extending a label $\lambda_{ij}$ with $SV_{term}$ currently set to false, we create (1) all feasible extensions $\lambda_{lm}$ where $j \neq m$ and $SV_{term}$ set to false and (2) all feasible extensions $\lambda_{lj}$ with $SV_{term}$ set to true. From observation (i), we can enforce $SV_{term}$ remain true for all extensions of a label with $SV_{term}$ set to true. The variable $\delta$ is initially set to false and is defined in the following manner: Consider a label $\lambda_{ij}$ where the convoy is at vertex $i$ that is an end of an impeded edge $(i, l) \notin S(V_{ij})$. For this impeded edge, the extension of $\lambda_{ij}$ will create (1) a label $\lambda_{im}$ with $\delta$ set to true and (2) a label $\lambda_{lm}$ with $\delta$ set to false. Next, consider a label $\lambda_{ij}$ with $\delta$ set to true. When extending $\lambda_{ij}$, we create (1) all labels $\lambda_{lm}$ where $(i, l) \in S(V_{ij})$ and (2) all labels $\lambda_{im}$ with $\delta$ set to true. When extending a label results in the convoy moving to a new vertex, $\delta$ is set to false. From this definition, $\delta$ captures the cooperative behavior between the service vehicle and the convoy when interacting with impeded edges. The rules for updating $SV_{term}$ and $\delta$ can be directly included with the REFs in Definition \ref{REF}.

The generalized permanent labeling algorithm ($GPLA$) then works as follows. We create an initial label $\lambda_{pq} = (p, \, q, \, 0, \, 0, \, 0, \, \emptyset$, \text{false}, \text{false}).  We initialize two lists $L_{open}$ and $D$ with $\lambda_{pq}$. We then take $\lambda_{pq}$ out of $L_{open}$ to find all its feasible extensions using the REFs of Definition \ref{REF} and the additional extension rules for $SV_{term}$ and $\delta$ previously described. The non-dominated extensions are then added to $L_{open}$ and $D$. We continue the process of using a chosen selection rule (last-in-first-out or LIFO in our implementation) to remove a label $\lambda_{ij}$ from $L_{open}$, find its extensions, and store the resulting non-dominated labels in $L_{open}$ and $D$. If a label $\lambda_{ij}$ with $i = d$ is removed from $L_{open}$, we do not extend it. This processes is repeated until $L_{open}$ no longer has any labels to extend and the algorithm terminates. Once the algorithm has terminated, the label $\lambda_{dj}$ in $D$ with minimum cost corresponds to an optimal solution to the ASPP. The trajectories for the optimal solution can be found by iteratively going through the predecessors of the label. Therefore, we require some way to store the predecessor label for any given label $\lambda_{ij}$. We have done this by directly including the parent label in the definition of $\lambda_{ij}$ in our implementation.

\color{black}

\begin{algorithm}
        \caption{Generalized Permanent Labeling Algorithm - $A^*$ $(GPLA^*)$ \label{alg:PLA^*}}
        \begin{algorithmic}[1]
            \State \textbf{Input:}  $G(V,E), p, q, d, h, UB$ 
            \State $\lambda_{start}\gets$ Initialization($p,q$)
            \State $D \gets \{\lambda_{start}\} $ \Comment{non-dominated labels} 
            \State $L_{open} \gets \{ \lambda_{start} \} $  \Comment{open list (sorted heap)}
            
            \While{$L_{open} \neq \emptyset $}
                \State $\lambda \gets$ Select$(L_{open})$
                    \If {ConvoyPosition$(\lambda)=d$}
                    \State \Return $\lambda$
                    \EndIf
            
            \For{$n$ $\gets$ Extensions($\lambda$)}   \Comment{$n=(l,m)$}
                \State $\lambda' \gets REF(\lambda,n)$
                    \If{$f(\lambda') \leq UB$ \textbf{and} $\lambda'$ is NonDominated }
                        \State $L_{open} \gets L_{open} \cup \{ \lambda' \}$
                        \State $D \gets D \cup \{ \lambda' \}$ 
                    \EndIf
            \EndFor
            \EndWhile

        \end{algorithmic}
\end{algorithm}

\subsection{Generalized Permanent Labeling Algorithm - $A^*$ ($GPLA^*$)}

We now introduce two final modifications to the permanent labeling algorithm to further reduce the number of labels that need to be extended and the overall computation time as a result. These modifications are motivated by the $A^*$ algorithm \cite{hart1968formal} used for solving the single agent shortest path planning problem. Therefore, we refer to $GPLA$ with these modifications as $GPLA^*$. Algorithm \ref{alg:PLA^*} shows the pseudo-code for $GPLA^*$.

\subsubsection{Early Termination}

In $GPLA$, we repeatedly extend labels from $L_{open}$ until we have exhausted all possible non-dominated labels. In general, the selection rule used to choose labels from $L_{open}$ to be extended can be arbitrary. The algorithm will always converge to the optimal solution irrespective of any selection rule. In permanent labeling algorithms, it is common to select the label most recently added to $L_{open}$ (LIFO) or find the label in $L_{open}$ with the least cost (best-first). The selection rule used can have a significant \color{black} impact on the termination time of the algorithm. 

For a single agent shortest path planning problem, the $A^*$ algorithm uses a heuristic-cost-based extension method to reach the destination while reducing the search space.
We use a similar approach.
For any label $\lambda_{ij}$, we define the heuristic cost $h_i$ to be a lower bound on the cost for the convoy to reach $d$ from $i$. 
In our implementation, we have taken $h_i$ to be the least cost path from $i$ to $d$ while treating all impeded edges as unimpeded, i.e., we set the edge weight to be $T^u_e$ for each edge. We define $f$-cost for $\lambda_{ij}$ to be the sum of $C_{ij}$ and $h_i$. In other words, $f$-cost is a lower bound on the cost for the convoy to reach the destination from the state corresponding to $\lambda_{ij}$.

\begin{lemma} \label{A* lemma}
If a label $\lambda_{ij}$ with the least $f$-cost is selected to be extended at every iteration, then the first such label with $i = d$ will correspond to an optimal solution to the ASPP.
\end{lemma}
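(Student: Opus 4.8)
<br>

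The plan is to mirror the classical correctness argument for $A^*$, adapted to the labeling setting. First I would set up notation: let $\lambda^\star$ be an optimal label, i.e. a non-dominated label with convoy position $d$ and cost equal to the optimal value $C^\star$ of the ASPP; by Theorem~\ref{nondominated theorem} such a label (or one dominating it) is generated by the extension process, so it suffices to reason about the order in which the best-first selection rule pops labels. Observe that because $h_d = 0$ (the heuristic cost to reach $d$ from $d$ is zero), any label $\lambda_{dj}$ has $f$-cost equal to its accumulated cost $C_{dj}$. Hence the first popped label with convoy position $d$ has $f$-cost equal to its own cost, and I must show this cost is exactly $C^\star$.

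The key step is the standard invariant: at the start of every iteration, as long as no label with convoy position $d$ has yet been popped, the open list $L_{open}$ contains a label $\lambda_{ij}$ lying ``on the way'' to an optimal solution whose $f$-cost is at most $C^\star$. Concretely, fix an optimal feasible trajectory pair and its sequence of labels $\lambda_{pq} = \mu_0, \mu_1, \dots, \mu_N = \lambda^\star$ produced by successive REF extensions. I would argue by induction that at each iteration some $\mu_k$ is present in $L_{open}$: initially $\mu_0 = \lambda_{start}$ is in $L_{open}$; whenever the currently-present $\mu_k$ is popped and extended, its successor $\mu_{k+1}$ is among the generated extensions, and it is either inserted into $L_{open}$ or is dominated by some label already in $D$ — and here I would use the monotonicity established in the proof of Theorem~\ref{nondominated theorem}, namely that a dominating label's extensions dominate the corresponding extensions, so a dominating label of $\mu_{k+1}$ (present in $L_{open}$ or re-extendable) still leads, via the same remaining extensions, to a label with convoy position $d$ of cost $\le C^\star$. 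The crucial inequality is that the $f$-cost of every $\mu_k$ is at most $C^\star$: this is because $C_{\mu_k}$ is the cost accumulated so far along an optimal trajectory and $h_i$ is a lower bound on the remaining convoy cost to $d$ (admissibility of $h$, which holds since replacing every impeded weight $T^i_e$ by the smaller $T^u_e$ and ignoring service-vehicle/waiting costs only decreases cost), so $f(\mu_k) = C_{\mu_k} + h_{i_k} \le C^\star$; monotonicity of accumulated cost along an extension (each REF step adds a nonnegative increment) is what makes $C_{\mu_k}$ non-decreasing and bounded by $C^\star$.

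Given the invariant, the conclusion follows quickly: suppose the first popped label $\lambda_{dj}$ with convoy position $d$ had cost $C_{dj} > C^\star$. At the moment it is selected, it has the least $f$-cost in $L_{open}$, yet by the invariant $L_{open}$ also contains some $\mu_k$ with $f(\mu_k) \le C^\star < C_{dj} = f(\lambda_{dj})$, contradicting the selection of $\lambda_{dj}$. Hence $C_{dj} \le C^\star$, and since $\lambda_{dj}$ corresponds to a feasible trajectory pair reaching $d$ (every label does, by Definition~\ref{REF}), $C_{dj} \ge C^\star$; therefore $C_{dj} = C^\star$ and the label is optimal. I would also note the algorithm does terminate: the dominance rule keeps the total number of non-dominated labels finite, so $L_{open}$ is eventually exhausted if no $d$-label is popped first, but the invariant guarantees some $d$-reaching label of cost $C^\star$ is eventually available to be popped.

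The main obstacle I anticipate is the dominance bookkeeping in the induction: when the tracked optimal-trajectory label $\mu_{k+1}$ is discarded because it is dominated, I need to be careful that a \emph{dominating} label is actually available in $L_{open}$ (or already processed in a way that keeps its descendants alive) so that the chain toward an optimal $d$-label is not broken. Handling this cleanly requires invoking the extension-monotonicity from Theorem~\ref{nondominated theorem} to replace $\mu_{k+1}, \dots, \mu_N$ by a dominating chain ending in a $d$-label of cost $\le C^\star$, and checking that the $f$-cost of the new chain's representative in $L_{open}$ is still $\le C^\star$ — which holds because domination implies $C' \le C$ (Conditions 1–2 give $\tau'_j \le \tau_j$, $T'_i \le T_i$, hence no larger accumulated cost) and the heuristic depends only on the convoy vertex, which is shared. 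Once that subtlety is dispatched, the rest is the textbook $A^*$ argument.
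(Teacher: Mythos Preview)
Your proof is correct and follows the classical admissibility-based $A^*$ argument: you maintain the invariant that $L_{open}$ always contains a label on an optimal chain with $f$-cost at most $C^\star$, and you carefully thread the dominance mechanism through that invariant via Theorem~\ref{nondominated theorem}. The paper takes a shorter, genuinely different route: it observes that the heuristic is not merely admissible but \emph{consistent}, in the sense that the $f$-cost is monotone non-decreasing along every REF extension. Concretely, the paper argues $h_i \le h_l + (C_{lm} - C_{ij})$, hence $f(\lambda_{ij}) \le f(\lambda_{lm})$; this holds because $h_i \le h_l + T^u_{e_x}$ by the triangle inequality for unimpeded shortest paths, while $C_{lm} - C_{ij} \ge T_l - T_i \ge T^u_{e_x}$ (and the waiting case $i=l$ is trivial). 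Consistency then gives the conclusion in two lines: when $\lambda_{dj}$ is popped it has the minimum $f$-cost in $L_{open}$, and since $f$-cost can only grow under extension, no current or future label can produce a $d$-label with cost below $C_{dj} = f(\lambda_{dj})$. Your approach is more general---it would survive with a merely admissible heuristic---and makes the dominance interaction explicit, whereas the paper's approach buys brevity by exploiting the stronger consistency property of the specific heuristic and leaves the dominance bookkeeping implicit in Theorem~\ref{nondominated theorem}.
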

\begin{proof}
From Definition \ref{REF}, for any extension from $\lambda_{ij}$ to $\lambda_{lm}$, we have $C_{ij} \leq C_{lm}$. We then see
\begin{align*}
    &h_i \leq h_l + (C_{lm} - C_{ij}) \\
    \Rightarrow &h_i + C_{ij} \leq h_l + C_{lm}
\end{align*}
which implies that the $f$-cost will never decrease with extensions. Let $\lambda_{dj}$ be the first label with the convoy at the destination selected to be extended.
This implies that all the other labels will have $f$-cost greater than or equal to the $f$-cost of $\lambda_{dj}$. Note that $f$-cost of $\lambda_{dj}$ is equal to $C_{dj}$. This implies that no other label can reach the destination with total cost lower than $C_{dj}$.
\end{proof}

\noindent In order to utilize the result in Lemma \ref{A* lemma}, we sort $L_{open}$ by the $f$-cost of each label and use a best-first selection rule to select the label with the smallest $f$-cost (Line 6 of Algorithm \ref{alg:PLA^*}). When a label $\lambda_{ij}$ with $i = d$ is selected, the algorithm terminates and this label is returned as the optimal solution. The heuristic cost $h_i$ at each vertex $i \in V$ is computed before starting the algorithm. This is indicated by the input $h$ in Algorithm \ref{alg:PLA^*}.
\color{black}

\subsubsection{Cost filter}

As previously discussed, when extending a label we only keep the resulting non-dominated labels. Checking for dominance is computationally expensive due to the third and fourth conditions of Definition \ref{domi}. To further reduce the number of dominance checks needed, we compute an upper bound $(UB)$ on the optimal solution to the ASPP before starting the labeling algorithm and use this upper bound to prematurely discard non-dominated labels. The upper bound used is the shortest path for the convoy from $p$ to $d$ without deploying the service vehicle. In other words, we set the cost of each unimpeded edge to be $T^u_e$ and the cost of each impeded edge to be $T^i_e$ and find the shortest path on $G$ with these edge weights. When a label is generated, before checking for dominance we check if the $f$-cost exceeds the upper bound and if so we discard the label.



\section{Results} \label{Results}

All algorithms were implemented in Python 3.6 and the computations were done on an MSI laptop (8 Core Intel i7-7700HQ processor @ 2.80 GHz, 16 GB RAM). For the analysis, we used a grid structure for the graph as it can easily represent a real-world scenario, like a warehouse, and is also easy to reproduce for verification. For all of the instances, the origin and the destination of the convoy, $p$ and $d$, were chosen to be at diagonally opposite ends of the grid to avoid trivial cases. Impeded edges can be chosen randomly or strategically so that the convoy has to traverse through at least one impeded edge to reach the destination. This can be achieved by choosing the impeded edges to make a cut between $p$ and $d$. A cut between $p$ and $d$ for an instance is defined as the set of edges $\{(a,b)\in E \, |\, a\in A, \, b\in B\}$ for any two disjoints sets $A,B\subset V$, such that $A\cup B =V$ and $p\in A$ and $d\in B$. An instance with $c$ cuts is constructed so that the impeded edge set $K$ only consists of edges from $c$ randomly chosen cuts. 

To help illustrate the graph structure used in the subsequent computational studies, Fig. \ref{sampleinstance} provides a sample instance. In Fig. \ref{sampleinstance}, the convoy and service vehicle optimal solution paths are presented on the graph. In this instance, we have set $T_e^u = 10$ and $\tau_e^u = 1$ for all edges and set $T_e^i = 40$ and $\tau_e^i = 6$ for all impeded edges. In the optimal solution, the service vehicle starts from $(7,4)$ and repairs the following sequence of impeded edges: $((3,3),(3,2))$, $((3,1),(2,1))$, $((2,1),(2,0))$ and $((7,3),(8,3))$. The convoy has to wait at $(2,0)$ for $4$ units of time in the optimal solution. The optimal solution cost for this instance is $172$.

The remainder of this section is as follows. We first show a comparison between the run times of $GPLA$ and $GPLA^*$. Next, we analyze $GPLA^*$ over three different classes of instances. In {\it Class 1}, we aim to test the computational limits of $GPLA^*$ as we increase the graph size and the number of impeded edges. In {\it Class 2}, we want to study the significance of unavoidable impeded edges on the optimal solution cost. We accomplish this by increasing the number of cuts on a grid graph of fixed size.
\color{black}
Finally, {\it Class 3} instances are designed to understand the impact of the service vehicle starting position on the optimal solution.

\begin{figure}[]
    \begin{center}
        \includegraphics[width=\linewidth, trim = 0 0 0 -0.1in]{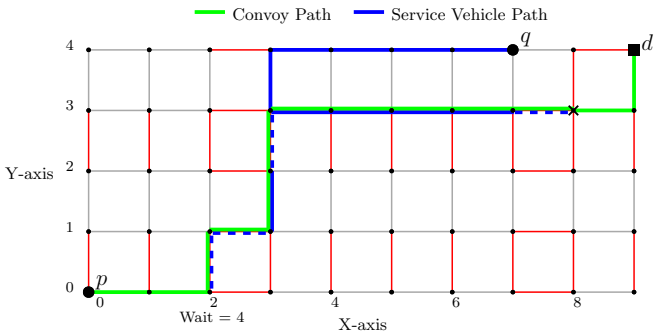}
        \caption{A $5\times10$ grid instance with convoy and service vehicle paths for the optimal solution. Impeded edges are represented by the red color and the remaining edges are unimpeded.
        The service vehicle terminates its path at node $(8,3)$ which is represented by a cross.
        The dashed path represents the impeded edge is serviced by the vehicle associated with the corresponding color. }
        \label{sampleinstance}
    \end{center}
\end{figure}

\subsection{Comparison between $GPLA$ and $GPLA^*$} \label{PLA-baseline}

To compare $GPLA$ and $GPLA^*$, we have created grid instances with randomly chosen impeded edges while enforcing the fraction of impeded edges to be $|K|/|E| = 0.1$ for every instance. The grid sizes were varied with increasing size and can be seen in Table \ref{GPLA-GPLA*}. For each grid size, 50 instances were generated. The starting position of the service vehicle was chosen randomly for each instance.
For the convoy, the unimpeded travel cost, $T_e^u$, was randomly chosen from the range $[10, \,15]$ for all edges and the impeded travel cost, $T^i_e$, was randomly chosen from the range $[40, \, 50]$ for the impeded edges. 
For the service vehicle, the unimpeded travel cost, $\tau^u_e$, was set to be $1$ unit for all edges and the impeded travel cost, $\tau^i_e$, was randomly chosen from the range $[2,\,6]$ for the impeded edges. For unimpeded edges, $T^i_e = T^u_e$ and $\tau^i_e = \tau^u_e$. This cost structure was chosen to encourage collaboration between the vehicles while keeping the decision-making non-trivial.

\begin{table}[!ht]
\vspace*{0.1in}
\caption{Computational time comparison: $GPLA$ vs $GPLA^*$}
\label{GPLA-GPLA*}
    \begin{center}
        \begin{tabular}{c c c c c }
            \toprule
            Grid-Size & $T_{GPLA}$ & $T_{GPLA^*}$  & $O_{GPLA}$ & $O_{GPLA^*}$ \\ \cmidrule(lr){1-5}
            $4\times3$    &  25 ms &  0.6 ms  &  578  &  10 \\
            $4\times4$    &  \,1.1 s   &  0.6 ms  &  8180  &  14 \\
            $4\times5$    &  149 s  &  0.7 ms  &  111804 &  17 \\
            $4\times6$    &  283 s &  0.9 ms  &  189365 &  28 \\
            \bottomrule
        \end{tabular}
    \end{center}
\end{table}

\begin{table*}[]
\vspace*{0.1in}
\caption{Class 1 Results}
\label{class1}
    \begin{center}
        \begin{tabular}{c c c| c c| c c|}
        \cline{2-7}
        \rule{0pt}{8pt}
        & \multicolumn{2}{|c|}{$|K|/|E|$ = 0.3} & \multicolumn{2}{c|}{$|K|/|E|$ = 0.4} &
        \multicolumn{2}{c|}{$|K|/|E|$ = 0.5}\\
        \hline
        \multicolumn{1}{|c|}{Grid Size} & time (sec.)* & $\gamma$ & time (sec.)* & $\gamma$  &  time (sec.)* & $\gamma$ \\ \hline

        \multicolumn{1}{|c|}{6$\times$6} & 0.47\,$\pm$ 2.5 & 1.00 & 7.6\,$\pm$\,27\,\, & 1.00 & 99 $\pm$ 201 & 0.86 \\ 
        
        \multicolumn{1}{|c|}{7$\times$7} & \,\,14\,\,$\pm$ 67 & 1.00 & \,\,58\,\,$\pm$\,162 & 0.88 & 197 $\pm$ 303 & 0.48 \\ 
        
        \multicolumn{1}{|c|}{8$\times$8} & \,\,17\,\,$\pm$ 66 & 0.96 & 105 $\pm$ 205 & 0.70 & 145 $\pm$ 210 & 0.22 \\ 
        
        \multicolumn{1}{|c|}{9$\times$9} & \,\,\,\,43\,\,$\pm$\,144 & 0.86 & 148 $\pm$ 250 & 0.48 & 158 $\pm$ 189 & 0.12 \\ 
        
        \multicolumn{1}{|c|}{10$\times$10} & \,\,\,\,45\,\,$\pm$\,119 & 0.80 & 126 $\pm$ 223 & 0.36 & 128 $\pm$ 129 & 0.08 \\ \hline
        \end{tabular}

        \item[*] time indicates the average computation time $\pm$ standard deviation.  
        
    \end{center}
\end{table*}

Table \ref{GPLA-GPLA*} shows the comparison between the two algorithms and highlights the effectiveness of $GPLA^*$ over $GPLA$. Both algorithms always result in the same optimal solution cost, hence it is not included in Table \ref{GPLA-GPLA*}. Each row represents the average value over $50$ randomly generated instances. $T_{GPLA}$ and $T_{GPLA^*}$ represents the average computational time for $GPLA$ and $GPLA^*$, respectively. $O_{GPLA}$ and $O_{GPLA^*}$ shows the average number of extended labels for both algorithms. The comparison between the computation times and the extended labels clearly indicate the effectiveness of the modifications introduced for $GLPA^*$.

\subsection{Class 1 instances}

To test the computational limits of the $GPLA^*$, we generated instances from $6\times6$ to $10\times10$ grid sizes. For each instance, the impeded edges were chosen randomly and we used the same cost structure as in \ref{PLA-baseline}. The fraction of impeded edges is denoted by $|K|/|E|$.
For each grid size, the fraction of impeded edges were varied to be 0.1, 0.2, 0.3, 0.4, and 0.5. We generated $50$ random instances for each grid size and fraction of impeded edges. The starting position for the service vehicle was chosen randomly for each instance. An instance is said to be \textit{successful} if it terminated with the optimal solution within 900 seconds. The success rate ($\gamma$) is defined as the fraction of all successful instances over the total number of instances. For a majority of the instances with $|K|/|E| = 0.1$ and $0.2$, we observed that the impeded edges did not form a cut and so the convoy had at least one unimpeded path from $p$ to $d$. Therefore, we do not include those results in our discussion.

The results for Class 1 are shown in Table \ref{class1}. In Table \ref{class1}, each entry in the time columns represents the average computation time of successful instances along with the standard deviation. We see the success rate decreases with increasing grid size and increasing fraction of impeded edges. The average computation time and the standard deviation increase with increase in grid size and fraction of impeded edges. For cases with relatively low success rate ($\gamma < 0.5$), the average computation time may not follow the same trend as the samples are skewed. 
From this set of instances, we see the computation time is significantly affected by the number of impeded edges. This is to be expected, as $GPLA^*$ must produce additional labels whenever a label is such that the convoy position is an end of an impeded edge. Similarly, an additional label is also generated whenever an extension is such that the service vehicle traverses an impeded edge, servicing it. As the number of impeded edges increases, the number of additional labels generated will begin to explode. The upper bound cost filter introduced for $GPLA^*$ significantly reduces the number of redundant labels generated. However, as the graph size also increases the upper bound becomes less tight and so the algorithm begins to fail to discard redundant labels early. This behavior can be seen by noticing as the fraction of impeded edges increases for a fixed grid size, the computation time begins to grow at an exponential rate. Conversely, for a fixed fraction of impeded edges, the computation time grows more slowly as the grid size increases. 
\color{black}

\subsection{Class 2 instances}

For this set of instances, we wish to determine how the presence of unavoidable impeded edges affects the optimal solution cost. To do this, we use a fixed grid of size $3 \times 15$ with the same cost structure defined in \ref{PLA-baseline} for each instance. We then generated cuts to introduce unavoidable impeded edges for the convoy. The narrow grid structure was chosen as it is easier to produce cuts with fewer edges, which keeps the computational time reasonable as we increase the number of cuts. The service vehicle's starting position was chosen randomly for each instance. The number of cuts was varied form 1 to 5. We generated 50 random instances for each cut size.


\begin{table}[!ht]
\vspace*{0.1in}
\caption{Class 2 Results}
\label{class2}
    \begin{center}
        \begin{tabular}{c c c c c c}
            \toprule
            Cuts & $|K|/|E|$ & $OPT$ & $OPT/UB$ & $OPT/LB$ &$\sigma(OPT)$ \\ 
            \cmidrule(lr){1-6}
            1  &  0.05 & 191 & 0.89  & 1.05  & 6.19 \\
            2  &  0.10 & 196 & 0.82  & 1.08  & 6.48 \\
            3  &  0.14 & 201 & 0.77  & 1.10  & 6.94 \\
            4  &  0.18 & 205 & 0.71  & 1.12  & 6.90 \\
            5  &  0.22 & 207 & 0.68  & 1.14  & 7.56 \\
            \bottomrule
        \end{tabular}
    \end{center}
\end{table}

The computational results for this set of instances is shown in Table \ref{class2}. Each row in Table \ref{class2} represents an average value over the $50$ instances for a set number of cuts. Column $|K|/|E|$ indicates the fraction of impeded edges. An upper bound, $UB$, was computed by having the convoy take the shortest path from $p$ to $d$ without any assistance from the service vehicle. The gap between the cost of the optimal solution to the ASPP and $UB$ gives an indicator of the benefit of deploying the service vehicle.
Similarly, a lower bound, $LB$, was computed by treating all impeded edges as unimpeded and finding the shortest path for the convoy from $p$ to $d$ only using the unimpeded cost for all edges.
Columns $OPT/UB$ and $OPT/LB$ represents the ratio of optimal cost, $OPT$, against the upper bound and the lower bound costs, respectively. 
We see as the number of cuts increases, the $OPT/UB$ ratio decreases.
This shows the effectiveness/benefit of the service vehicle in an impeded environment, especially as the number of unavoidable impeded edges increases.
Column $\sigma(OPT)$ shows the standard deviation of the optimal cost. The increasing trend of $\sigma(OPT)$ indicates that the optimal solution cost is sensitive to the number of unavoidable impeded edges. \newline

\subsection{Class 3 instances}

For this set of instances, we examine the impact of the starting position of the service vehicle on the optimal solution. For each instance, we again use a $3 \times 15$ grid, but we impose a fixed cost structure for all instances. All edges were assigned costs $T^u_e = 10$ and $\tau^u_e = 1$ and the impeded edges were all assigned costs $T^i_e = 40$ and $\tau^i_e = 6$. The convoy's starting position and destination were fixed at diagonally opposite ends with positions $(0, 0)$ and $(14, 2)$ (see Figure \ref{cost_map}). This cost structure and choice of positions were chosen to encourage collaboration between the convoy and service vehicle. We generated 50 different instances by randomly generating 3 cuts for each instance. For a given instance, we compute the optimal solution cost for each of the 45 possible starting positions for the service vehicle. We then computed the average optimal cost across all 50 instances for each of the 45 service vehicle starting positions. 


\begin{figure}[!ht]
\begin{center}
\includegraphics[width=\linewidth]{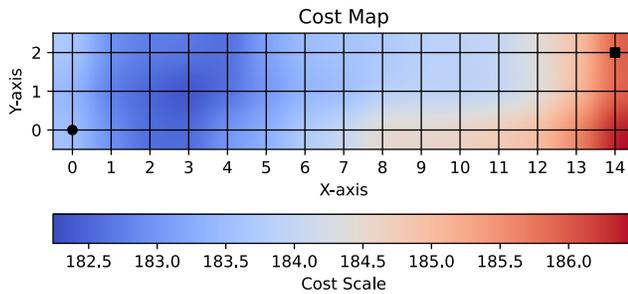}
\caption{Average cost map as a function of the starting position of the service vehicle.}
\label{cost_map} 
\end{center}
\end{figure}

Fig. \ref{cost_map} shows the average cost across $50$ instances for each starting position in the grid. The convoy starting position and destination are marked at $(0,0)$ and $(14,2)$.
Gaussian interpolation was used to generate a continues map. We observe that the minimum and maximum average costs are achieved at vertex $(3,1)$ and $(14,0)$ respectively. Fig. \ref{cost_map} shows the starting position of the service vehicle will have a noticeable impact on the optimal solution. The magnitude of the impact will depend on the numerical values of the costs for the edges. We also note the best starting position for the service vehicle in this set of instances does not coincide with the starting position of the convoy. This is to be expected. If the service vehicle starts some distance away from the convoy, it may be able to attend to more significant impeded edges before the convoy is able to reach them.

\section{Conclusion and Future Work} \label{Conclusion}
In this article, we considered an assisted path planning problem for a convoy and a service vehicle in an impeded environment. Although the convoy is capable of traversing impeded edges, the service vehicle can help the convoy to reach the destination with a lower total accumulated cost. A generalized permanent labeling algorithm ($GPLA$) was presented to solve the ASPP to optimality. Additional modifications were introduced to reduce the search space and the overall computational time as a result. The modified algorithm is referred as $GPLA^*$. 
Computational results showed the effectiveness of the modifications in $GLPA^*$ when compared to $GLPA$. Another computational study was done to show the computational performance of $GLPA^*$ as the problem size increased and as the number of impeded edges increased. It was found the number of impeded edges has a more significant impact on the computation time of $GLPA^*$ versus the overall size of the problem (i.e., the number of vertices). Two more computational studies were performed to study the impact of the presence of the number of unavoidable impeded edges and of the starting position of the service vehicle on the optimal solution to the ASPP. 


The future potential extensions of this problem may include limited communication range between the vehicles or the use of specialized service vehicles that can only repair some specific type of impeded edge. A non-deterministic version of ASPP can also be considered where a probability density function is assigned with the travel cost of an impeded edge.

\addtolength{\textheight}{-12cm}   

\bibliographystyle{ieeetr}
\bibliography{citations.bib}

\end{document}